\newtheorem{definition}{Definition}
\newcommand{\vpara}[1]{\noindent\textbf{#1}}
\newcommand{\etal}{\emph{et~al.}\xspace} 
\newcommand{\ie}{\emph{i.e.}\xspace} 
\newcommand{\eg}{\emph{e.g.}\xspace} 
\newcommand{\our}{\textsc{SHMM}\xspace}
\newtheorem{theorem}{Theorem}
\newtheorem{lemma}{Lemma}
\begin{document}
	%
	\title{A Spherical Hidden Markov Model for Semantics-Rich Human Mobility Modeling}
	
	\author{
		Wanzheng Zhu\thanks{Equal contribution.},  Chao Zhang\footnotemark[1], Shuochao Yao, Xiaobin Gao, Jiawei Han\\
		University of Illinois at Urbana-Champaign, Urbana, IL, USA\\
		\{wz6, czhang82, syao9, xgao16, hanj\}@illinois.edu
	}

	\maketitle

	\begin{abstract}
		
		
		We study the problem of modeling human mobility from semantic trace data,
		wherein each GPS record in a trace is associated with a text message that
		describes the user's activity.  Existing methods fall short in unveiling human
		movement regularities for such data, because they either do not model the text
		data at all or suffer from text sparsity severely. We propose \our, a
		multi-modal spherical hidden Markov model for semantics-rich human mobility
		modeling. Under the hidden Markov assumption, \our models the generation
		process of a given trace by jointly considering the observed location, time,
		and text at each step of the trace. The distinguishing characteristic of \our
		is the text modeling part. We use fixed-size vector representations to encode
		the semantics of the text messages, and model the generation of the
		$l_2$-normalized text embeddings on a unit sphere with the von Mises-Fisher
		(vMF) distribution. Compared with other alternatives like multi-variate
		Gaussian,  our choice of the vMF distribution not only incurs much fewer
		parameters, but also better leverages the discriminative power of text
		embeddings in a directional metric space. The parameter inference for the vMF
		distribution is non-trivial since it involves functional inversion of ratios of
		Bessel functions.  We theoretically prove, for the first time, that: 1) the
		classical Expectation-Maximization algorithm is able to work with vMF
		distributions; and 2) while closed-form solutions are hard to be obtained for
		the M-step, Newton's method is guaranteed to converge to the optimal solution
		with quadratic convergence rate. We have performed extensive experiments on
		both synthetic  and real-life data. The results on synthetic data verify our
		theoretical analysis; while the results on real-life data demonstrate that \our
		learns meaningful semantics-rich mobility models, outperforms state-of-the-art
		mobility models for next location prediction, and incurs lower training cost. The code and datasets are available at \url{https://github.com/WanzhengZhu/SHMM}.
	\end{abstract}

	\section{Introduction}
	
	Uncovering human mobility patterns is not only a fundamental task for human
	behavioral analysis, but also an important building block for urban planning,
	traffic forecasting, mobile health applications, and location-based recommender
	systems \cite{gonzalez2008understanding,kitamura2000micro,zhang2016gmove}.
	Recent years are witnessing an increasing importance of modeling human mobility
	from semantic trace data, where each record in a trace is associated with a
	text message that describes the user's activity. With the wide proliferation of
	mobile devices and the ubiquitous access to the mobile Internet, massive
	semantic trace data are being collected by various social media services (\eg,
	Twitter, Instagram, Facebook) and phone carriers on a daily basis
	\cite{ChengCLS11,wu2015semantic,ZhangZYZZKWH16,zhang2017regions}. Meanwhile,
	raw GPS trajectories can be readily linked with external sources (\eg, map
	data, land uses) to annotate each record with rich semantic information
	\cite{wu2015semantic}.

	The wide availability of semantic trace data necessitates a shift in the
	paradigm of human mobility modeling --- it becomes possible to interpret human
	mobilities in a semantics-rich way. In addition to uncovering the
	spatiotemporal patterns of human movements, we could move one step further to
	understand \emph{what} are people's activities at different regions, and
	\emph{how} and \emph{why} people move from one region to another. Such
	semantics-rich knowledge not only enables us to understand human mobility in a
	more interpretable way, but also plays an important role for prediction and
	decision making in various downstream applications.

	Unfortunately, learning semantics-rich human mobility models from semantic
	trace data is a challenging problem that remains largely unsolved by existing
	techniques.  Traditional mobility modeling techniques
	\cite{giannotti2007trajectory,LiDHKN10,Cho2011,MathewRM12} mostly focus on
	mining pure spatiotemporal regularities and cannot handle the text information
	in semantic traces. Recently, there have been research efforts that attempt to
	integrate the text information into the mobility modeling process based on the
	bag-of-words model
	\cite{Ying2011,wu2015semantic,zhang2016gmove}. Nevertheless,
	these methods are unable to make the best use of the text information. First,
	by considering each keyword as an independent dimension, they do not model the
	correlations between keywords (\eg, `car', `taxi' and `drive') and may fail to
	correlate semantically similar messages.  Further, since the vocabulary size is
	often large, their performance is limited by the high dimensionality and text
	sparsity, and meanwhile results in high computational overhead in the model
	learning process.
	
	

	To learn semantics-rich mobility models from semantic trace data, we propose
	\our, a method that uncovers human mobility regularities by statistically
	modeling the generation process of the given trace data. \our is a multi-modal
	spherical hidden Markov model (HMM). Under the hidden Markov assumption, it
	jointly models the generation of the observed location, time, and text at each
	step of an input trace. While the low-dimensional location and time can be
	modeled with Gaussian distributions, the key challenge is to capture the
	semantics of the high-dimensional text messages and model textual semantics.
	To address this challenge, we use fixed-size vector representations to encode
	the semantics of the text messages, which has been recently shown successful
	for a wide variety of NLP tasks
	\cite{mikolov2013efficient,mikolov2013distributed}. With the derived text
	embeddings, we further model the generation of the $l_2$-normalized text
	embeddings on a unit sphere with the von Mises-Fisher (vMF) distribution
	\cite{fisher1953dispersion}.  Compared with other alternatives like
	multi-variate Gaussian, our choice of the vMF distribution not only incurs much
	fewer parameters, but also better unleashes the discriminative power of text
	embeddings in a directional metric space.

	In the parameter inference process of our \our model, we use the classical
	Expectation-Maximization (EM) algorithm \cite{dempster1977maximum}.  However,
	since the vMF distribution has a complicated mathematical form, literature so
	far has not yet proved that EM algorithm is able to work with the vMF distribution.
	We, for the first time, theoretically prove the feasibility of using the EM
	algorithm on the vMF distribution.  Furthermore, while closed-form solutions
	are hard to be obtained for the M-step, we prove that using Newton's method is
	guaranteed to converge to the optimal solution with quadratic convergence rate. 
	
	

	To summarize, we make the following contributions:


	
	\begin{enumerate}
		
		\item We propose a spherical hidden Markov model for human mobility
		modeling with semantic trace data. Compared with existing mobility models,
		our method is novel in that it uses embeddings to encode the semantics of
		text messages and the von-Mises Fisher distribution to model the generation 
		of text embeddings.

		\item We provide rigorous theoretical proof to show that
		the EM algorithm is able to work with the vMF distribution. Also, while
		obtaining closed-form solutions for the M-step is intractable, we prove that Newton's method is guaranteed to converge to the optimal
		solution with quadratic convergence rate.  Some other properties of the vMF distribution and
		modified Bessel functions are also studied.  
		
		\item We perform extensive experiments on both synthetic and real-life data.
		The results on synthetic data verify our theoretical analysis; while the
		results on real-life data demonstrate that \our learns meaningful
		semantics-rich mobility models, outperforms state-of-the-art mobility
		models for next location prediction, and incurs lower training cost.
		
	\end{enumerate}

	\section{Problem Description}
	
	
	
	We study the problem of modeling human mobility from semantic trace data.
	Semantic traces are text-rich GPS traces where each GPS record is associated
	with a text message that describes the user's activity.  We provide the formal
	definition of a semantic trace as follows.
	
	\begin{definition} [Semantic Trace]
		The semantic trace of a user $u$ is a time-ordered sequence $S = [x_1,
		x_2, \ldots, x_R]$. Each record $x_i$ is a tuple $(t_i, l_i, m_i)$ where: (1)
		$t_i$ is the timestamp scalar; (2) $l_i$ is a two-dimensional vector representing
		the location of the user at time $t_i$; and (3) $m_i$ is a text message vector
		describing the activity of the user.
	\end{definition}
	
	
	To capture the semantics of user activities, we use distributed representations
	for the text messages in our model. Specifically, we first use the CBOW model
	\cite{mikolov2013distributed} to obtain fixed-size vector representations (\ie,
	embeddings) for the keywords in the given corpus. The parameters used are:
	\textit{min-count}=10, \textit{size}=30, \textit{window}=5,
	\textit{sample}=$10^{-4}$, \textit{negative}=5. As each text message usually
	consists of multiple keywords, we compute the TF-IDF weights of the keywords
	and use the weighted average of keyword embeddings to derive the embedding of
	the message $m_i$ \cite{le2014distributed,arora2016simple}.

	Now we are ready to formulate our mobility modeling problem.  Given the
	semantic traces for a group of users $D = \{S_1, S_2, \ldots, S_G\}$, our work
	aims to build semantics-rich mobility models for those users. The result
	mobility model is expected to address two aspects regarding human mobility: (1)
	\textbf{Discovering latent states.} The first aspect is to discover the latent
	states that govern people's activities. A latent state is an abstraction of
	\emph{what} the user is doing around \emph{where} during \emph{when}.  Examples
	include \emph{shopping} in the \emph{5th Ave} at \emph{5pm}, and \emph{watching
		a film} at the \emph{AMC theater} in the \emph{evening}.  (2)
	\textbf{Characterizing transition regularity.} The second aspect is to
	characterize how users move sequentially between the latent states.  For
	example, after shopping in the 5th Ave, what activities will the users do next?
	We aim to characterize people's transitions among the latent states in a
	concise and probabilistic way.

	
	


	\section{The \our Model}
	
	In this section, we describe SHMM in detail and describe the
	parameter inference procedure. 
	
	\subsection{Model Description}
	
	Consider a sequence of chronologically ordered records $x_1, x_2, \ldots, x_R$
	of a user $u$. In \our, we adopt the hidden Markov assumption, \ie, assuming
	each record $x_i$ is generated from a latent state $z_i$, and the sequence
	$z_1, z_2, \ldots, z_R$ follows a Markov process. The Markov process is
	parameterized by an  initial probability matrix $\pi$ over the latent states,
	as well as a matrix $A$ that specifies the transition probabilities among the
	latent states.  When generating $x_i$ from $z_i$, we assume the  location
	$l_i$,  the timestamp $t_i$, and the text embedding $m_i$ are generated
	independently. Therefore, the conditional probability $p(x_i|z_i )$ is given by
	$p(x_i|z_i ) = p(t_i|z_i) \cdot p(l_i|z_i) \cdot p(m_i|z_i)$.
	
	For each record $x_i$, we assume the following distributions for each
	component: 1) the timestamp $t_i$ is generated from a univariate Gaussian with
	mean $\mu_t$ and variance $\sigma_t$, \ie $p(t_i|z_i) = \mathcal{N} (t_i |
	\mu_t, \sigma_t)$, where $t_i$ indicates the time in a day; 2) the location
	$l_i$ is generated from a bivariate Gaussian with mean $\mu_l$ and covariance
	matrix $\Sigma_l$, \ie  $p(l_i|z_i) = \mathcal{N} (l_i | \mu_l, \Sigma_l)$; 3)
	the message vector $m_i$ is generated from the von Mises-Fisher (vMF)
	distribution with mean direction $\mu$ and concentration parameter $\kappa$,
	\ie $p(m_i|z_i) = \text{vMF}(m_i | \mu, \kappa)$. 
	
	
	While the Gaussian distribution is suitable for modeling timestamps and
	locations, it is problematic for modeling text embeddings. The reason is
	two-fold. First, using Gaussian distributions to model text embeddings would 
	lead to a large
	co-variance matrix with too many parameters. Second, previous research has
	demonstrated that the cosine distance better reflects the semantic similarity
	between text embeddings compared to the Euclidean distance, \ie, the
	discriminative power of the text embeddings is stronger in a directional metric
	space.  Our choice of the vMF distribution addresses the above two issues.  A
	vMF distribution is defined on the $p$-dimensional unit sphere, parameterized
	by two parameters: a mean direction $\mu$ and a concentration parameter
	$\kappa$. The mean $\mu$ specifies the direction of the semantic focus of the
	text embeddings, while $\kappa$ controls how concentrated the text
	embeddings are around the mean direction.
	The larger $\kappa$ is, the more concentrated the text embeddings are around
	the mean direction. 
	Formally, the probability density function of a vMF distribution for a
	$p$-dimensional unit vector $m$ is defined as: 
	$$f_p(m; \mu, \kappa) = C_p(\kappa) \exp(\kappa \mu^T m),$$ 
	where $||\mu|| = 1$, $\kappa \geq 0$, $C_p(\kappa) =
	\frac{\kappa^{p/2-1}}{(2\pi)^{p/2} I_{p/2-1}(\kappa)}$, and $p$ is the
	dimension of the vector. $I_v(\cdot)$ is the modified Bessel function of the
	first kind at order $v$ and is defined as $I_v(\kappa) = \sum_{q=0}^{\infty}
	\frac{1}{q!  \Gamma(q+v+1)} (\frac{\kappa}{2})^{2q+v}$, and $\Gamma(\cdot)$ is
	the gamma function. 
	
	
	\subsection{Parameter Inference}
	
	In our \our model, the parameters to be estimated are the parameters $(\pi, A)$
	for the hidden states and the distribution parameters $(\mu_t, \sigma_t, \mu_l,
	\Sigma_l, \mu, \kappa)$. Since we are using standard Gaussian distributions for
	modeling location and time, the updating rules for all the parameters except
	$\kappa$ can be easily derived by the Baum-Welch algorithm
	\cite{baum1970maximization} --- an Expectation-Maximization procedure for HMM.
	The challenge of applying the Baum-Welch algorithm is how to estimate the
	parameter $\kappa$.
	
	Due to the complicated form of the vMF distribution, it is intractable to
	derive closed-form solutions for $\kappa$ in the M-step of the Baum-Welch
	algorithm.  However, we found that one can use Newton's method to find an
	approximate solution of $\kappa$ that asymptotically converges to the optimal
	value. Below we first present our method for updating $\kappa$ based on Banerjee's work \cite{banerjee2005clustering} and 
	Newton's method, and then show our theoretical
	analysis that our update rule achieves quadratic convergence rate.
	In the M-step of the Baum-Welch algorithm, we estimate the $\kappa$ value with
	the following iterative procedure:
	\begin{align*}
	&\qquad \bar{r} \leftarrow \frac{||\sum_{i=1}^{N} m_i||}{N}\\
	&\qquad\kappa \leftarrow \frac{\bar{r}p-\bar{r}^3}{1-\bar{r}^2}\\
	&\textbf{repeat}\\
	&\qquad\kappa \leftarrow \kappa - \frac{A_p(\kappa) - \bar{r}}{1-A_p^2(\kappa) - \frac{p-1}{\kappa}A_p(\kappa)}\\
	&\textbf{until} \;	\text{convergence}
	\end{align*}
	where $m_i$ is a $l_2$-normalized $p$-dimensional text embedding, $N$ is the
	total number of text embeddings belonging to the current state, and $A_p(\kappa) =
	\frac{I_{p/2}(\kappa)}{I_{p/2-1}(\kappa)}$. 

	\section{Theoretical Analysis}
	
	
	Due to the complicated mathematical form of the vMF distribution, no existing
	literature has proved that the vMF distribution can work under the EM
	framework.  In this section, we theoretically prove that:
	
	\begin{enumerate}
		\item The EM algorithm is able to work with the vMF distribution, because
		there exists a unique $\kappa$ such that the Q-function in the M-step can
		be maximized.
		\item While closed-form solutions for $\kappa$ are hard to be obtained, one
		can use Newton's method for obtaining an approximate solution, which is
		guaranteed to converge to the optimal $\kappa$ for M-step with quadratic
		convergence rate.
	\end{enumerate}
	
	\begin{theorem}
		There exists a unique $\kappa$ that maximizes the Q-function in the M-step of the EM algorithm.  
	\end{theorem}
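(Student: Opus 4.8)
The plan is to reduce the M-step maximization over the vMF parameters of a given state to a one-dimensional problem in $\kappa$, and then to exploit the monotonicity of the Bessel-function ratio $A_p$. First I would isolate the part of the Q-function that depends on the vMF parameters. Letting $\gamma_i$ be the posterior weight the E-step assigns to the state under consideration for record $i$, that part is proportional to $\sum_i \gamma_i \log f_p(m_i;\mu,\kappa) = \sum_i \gamma_i\bigl(\log C_p(\kappa) + \kappa\,\mu^\top m_i\bigr)$. Maximizing over the unit mean direction $\mu$ first (which is independent of $\kappa$) forces $\mu \propto \sum_i \gamma_i m_i$, so with $N=\sum_i\gamma_i$ and $\bar r = \|\sum_i\gamma_i m_i\|/N \in [0,1]$ the objective collapses to $Q(\kappa) = N\bigl(\log C_p(\kappa) + \kappa\,\bar r\bigr)$ --- exactly the $\bar r$ appearing in our update rule. (If $\bar r = 0$ the maximizer is $\kappa = 0$; otherwise $\mu$ is well defined and we may restrict to $\kappa > 0$.)

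Next I would differentiate. Writing $\log C_p(\kappa) = (p/2-1)\log\kappa - (p/2)\log(2\pi) - \log I_{p/2-1}(\kappa)$ and using the modified-Bessel recurrence $I_v'(\kappa) = I_{v+1}(\kappa) + \tfrac{v}{\kappa}I_v(\kappa)$, the two $(p/2-1)/\kappa$ contributions cancel and one gets $\frac{d}{d\kappa}\log C_p(\kappa) = -A_p(\kappa)$, hence $Q'(\kappa) = N\bigl(\bar r - A_p(\kappa)\bigr)$. Thus a stationary point is precisely a solution of $A_p(\kappa) = \bar r$, and it remains to show that this equation has a unique solution and that the solution is a maximizer of $Q$.

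To finish I would establish that $A_p$ is continuous and strictly increasing on $(0,\infty)$, with $A_p(0^+) = 0$ (from the small-argument expansion $I_v(\kappa)\sim(\kappa/2)^v/\Gamma(v+1)$, giving $A_p(\kappa)\sim\kappa/p$) and $A_p(\kappa)\to 1$ as $\kappa\to\infty$ (from the large-argument asymptotics of $I_v$, which are independent of the order to leading order). Since $\bar r\in(0,1)$ in the non-degenerate case, the intermediate value theorem then gives existence, and strict monotonicity gives uniqueness; because $Q'$ is positive for small $\kappa$ and negative for large $\kappa$, the unique stationary point is the global maximizer. The main obstacle is the strict monotonicity of $A_p$, equivalently the strict concavity of $Q$: proved by hand it needs a Turán-type inequality for ratios of modified Bessel functions (or the continued-fraction expansion of $A_p$), but it follows cleanly by observing that under $\mathrm{vMF}(\mu,\kappa)$ the scalar $t=\mu^\top m$ is a one-parameter exponential family in the natural parameter $\kappa$ with $\mathbb{E}_\kappa[t]=A_p(\kappa)$, so $A_p'(\kappa)=\mathrm{Var}_\kappa[t]>0$ for every finite $\kappa$ (equivalently, $-\log C_p$ is the log-partition function and is strictly convex). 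I would present this exponential-family argument and mention the Bessel-inequality route as the alternative.
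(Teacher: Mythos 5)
Your proof is correct and follows the same overall skeleton as the paper's: reduce the M-step to the scalar equation $A_p(\kappa)=\bar r$, establish the boundary limits $A_p(0^+)=0$ and $A_p(\kappa)\to 1$, and combine continuity with strict monotonicity of $A_p$ and the intermediate value theorem to get existence and uniqueness. The differences are in how the two key ingredients are justified. For the reduction, the paper simply cites Banerjee et al.\ (2005), whereas you derive it explicitly by maximizing over $\mu$ first and computing $\frac{d}{d\kappa}\log C_p(\kappa)=-A_p(\kappa)$; your version is also more faithful to the EM setting, since you carry the posterior weights $\gamma_i$ into $\bar r$ and handle the degenerate case $\bar r=0$, which the paper glosses over by declaring $\bar r>0$ ``obvious.'' For the monotonicity $A_p'(\kappa)>0$, the paper cites an inequality of Amos (1974), while you observe that the vMF family is a one-parameter exponential family in $\kappa$ with sufficient statistic $t=\mu^\top m$, so $A_p'(\kappa)=\mathrm{Var}_\kappa[t]>0$; this is a cleaner, self-contained argument that also immediately shows the stationary point is a global maximum (strict concavity of $Q$ in $\kappa$), a point the paper leaves implicit. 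One small caveat shared by both proofs: when $\bar r=1$ (all weighted embeddings identical in direction) the equation $A_p(\kappa)=\bar r$ has no finite solution, so the claim really requires $\bar r<1$; you acknowledge this by restricting to the non-degenerate case, while the paper's Claim~1 allows $\bar r=1$ without comment.
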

	\begin{proof}
		To maximize the Q-function, it is equivalent to solve $A_p(\kappa) = \bar{r}$
		where $\bar{r} = \frac{||\sum_{i=1}^{N} m_i
			||}{N}$ \cite{banerjee2005clustering}. Based on this result, we have the
		following claims:
		\begin{enumerate}
			\item Claim 1: $0< \bar{r} \leq 1$.
			
			\textit{Proof.} It is obvious that $\bar{r} > 0$. Since $m_{i1}^2+m_{i2}^2 + \ldots + m_{ip}^2 = 1$, we have $||\sum_{i=1}^{N} m_i||^2 = (m_{11}+...+m_{N1})^2 + ... + (m_{1p}+...+m_{Np})^2  \leq N(m_{11}^2+ ... + m_{N1}^2 +  ... + m_{1p}^2+ ... + m_{Np}^2)  = N^2$. Hence, $\bar{r} = \frac{||\sum_{i=1}^{N} m_i ||}{N} \leq 1$. 
			
			\item Claim 2: $\lim\limits_{\kappa \rightarrow 0} A_p(\kappa) = 0$, $\lim\limits_{\kappa \rightarrow \infty} A_p(\kappa) = 1$
			
			\textit{Proof.} The first equation is given by Lemma 2.1 in \cite{segura2011bounds}. With Corrolary 1 in \cite{balachandran2013exponential}, we have $ \exp(-\frac{p}{2\kappa}) \leq A_p(\kappa) \leq \exp(-\alpha_0\frac{p-1}{2\kappa})$, where $\alpha_0 = -log(\sqrt{2}-1)$, if $p \leq 2\kappa$. 
			Hence, we have $ \lim\limits_{\kappa \rightarrow \infty}\exp(-\frac{p}{2\kappa}) \leq \lim\limits_{\kappa \rightarrow \infty}A_p(\kappa) \leq \lim\limits_{\kappa \rightarrow \infty}\exp(-\alpha_0\frac{p-1}{2\kappa})$. Therefore, $\lim\limits_{\kappa \rightarrow \infty}A_p(\kappa) = 1$. 
			
			\item Claim 3: $A_p(\kappa)$ is a continuous function if $\kappa$ is real-valued and positive. 
			
			\textit{Proof.} By the definition of the modified Bessel function and its recurrence relation (Equation 9.6.1 and 9.6.26 in \cite{abramowitz1964handbook}), we can get $A_p'(\kappa) = 1-A_p^2(\kappa) - \frac{p-1}{\kappa}A_p(\kappa)$. Since $A_p(\kappa)$ is differentiable when $\kappa$ is real-valued and positive, $A_p(\kappa)$ is continuous at all real-valued positive $\kappa$. 
		\end{enumerate}
		By the intermediate value theorem and the above claims, we have that
		there exists a solution for $A_p(\kappa) = \bar{r}$. Since we have
		$A_p'(\kappa) > 0$ for all positive $\kappa$ (Equation 15 in
		\cite{amos1974computation}), there exists a unique $\kappa$ such that
		$A_p(\kappa) = \bar{r}$ and the Q-function is maximized. This completes the
		proof. 
	\end{proof}
	
	So far, we have proved that there exists one unique $\kappa$ to maximize the
	Q-function. This implies the EM algorithm is able to work with vMF distributions.
	However, it is non-trivial to find the optimal $\kappa$ since it involves
	calculating the ratios of modified Bessel functions. Next, we show that the
	solution can be found by Newton's Method.  
	
	\begin{lemma} 
		$A_p(\kappa)$ is a strictly concave function, when $\kappa > 0$ and $p \geq 2$.
	\end{lemma}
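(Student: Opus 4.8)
The cleanest route, I think, is to prove $A_p''(\kappa)<0$ for all $\kappa>0$ by a sign‑propagation (``barrier'') argument, rather than by brute‑force bounding of $A_p(\kappa)$. Write $A:=A_p(\kappa)$, and recall from the proof of Theorem 1 the Riccati‑type identity $A'=1-A^{2}-\frac{p-1}{\kappa}A$, valid for all $\kappa>0$. Differentiating it once gives $A''=-2AA'+\frac{p-1}{\kappa^{2}}A-\frac{p-1}{\kappa}A'$; multiplying by $\kappa$ and then substituting $\frac{p-1}{\kappa}A=1-A^{2}-A'$ back in collapses this to the compact relation
\[
\kappa A'' \;=\; (1-A^{2})-A'\,(p+2\kappa A)\qquad(\star).
\]
I would also record the behaviour near the origin: from the power series of $I_{p/2}$ and $I_{p/2-1}$ one has $A_p(\kappa)=\kappa/p+O(\kappa^{3})$ with a strictly negative cubic coefficient for $p\ge 2$, so (either from this or by expanding $(\star)$ to second order) $A_p''(\kappa)<0$ for all sufficiently small $\kappa>0$.

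The engine of the argument is obtained by differentiating $(\star)$ once more. Using $\frac{d}{d\kappa}(1-A^{2})=-2AA'$ and $\frac{d}{d\kappa}(p+2\kappa A)=2A+2\kappa A'$,
\[
A''+\kappa A''' \;=\; -4AA'-2\kappa (A')^{2}-A''\,(p+2\kappa A).
\]
Now suppose $A''(\kappa_{0})=0$ at some $\kappa_{0}>0$. Substituting this kills the $A''$ terms and leaves
\[
\kappa_{0}A'''(\kappa_{0}) \;=\; -2A'(\kappa_{0})\bigl(2A(\kappa_{0})+\kappa_{0}A'(\kappa_{0})\bigr).
\]
By Theorem 1 we know $0<A(\kappa_{0})<1$ and $A'(\kappa_{0})>0$ (Eq. 15 of \cite{amos1974computation}), so the right‑hand side is strictly negative: $A'''(\kappa_{0})<0$. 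In words, wherever $A''$ vanishes it is strictly decreasing, so $A''$ can only cross $0$ going downward.

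To finish, set $\kappa_{0}:=\inf\{\kappa>0:\ A_p''(\kappa)\ge 0\}$. If this set is empty we are done; otherwise $\kappa_{0}>0$ by the base case, $A_p''<0$ on $(0,\kappa_{0})$ by minimality, and (the set $\{A_p''\ge 0\}$ being closed by continuity) $A_p''(\kappa_{0})=0$. But then the displayed identity above forces $A_p'''(\kappa_{0})<0$, whereas $A_p''<0$ on $(0,\kappa_{0})$ with $A_p''(\kappa_{0})=0$ forces $A_p'''(\kappa_{0})=\lim_{\kappa\uparrow\kappa_{0}}A_p''(\kappa)/(\kappa-\kappa_{0})\ge 0$ — a contradiction. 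Hence $A_p''(\kappa)<0$ for every $\kappa>0$, which is exactly strict concavity.

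The algebra behind $(\star)$ and its derivative is routine, and the only external inputs are that one Riccati identity and the bounds $0<A_p<1$, $A_p'>0$ already carried over from Theorem 1. The point I would be most careful about is the base case: one genuinely needs $A_p''$ to be negative as $\kappa\downarrow 0$ (so that $\kappa_{0}>0$), and $A_p$ to be $C^{3}$ on $(0,\infty)$ for the limiting argument at $\kappa_{0}$; both are immediate from the analyticity of $I_\nu$ and positivity of $I_{p/2-1}$ on $(0,\infty)$, but they are what a referee will check. I would deliberately avoid the seemingly more direct approach of writing $\kappa^{2}A_p''$ as an explicit cubic polynomial in $A_p$ and substituting Bessel‑ratio bounds: that reduces to proving $A_p(\kappa)<r_1(\kappa)$ for a root $r_1$ of that cubic, and the gap $r_1-A_p$ is so thin — of order $\kappa^{-3}$ as $\kappa\to\infty$ — that no fixed number of the standard iterated bounds in \cite{segura2011bounds} is sharp enough uniformly in $\kappa$, which is precisely why the sign‑propagation trick is preferable.
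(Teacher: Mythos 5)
Your proof is correct, and it takes a genuinely different route from the paper's. The paper imports Theorem~3 of \cite{simpson1984some} (strict convexity of $\kappa/A_p(\kappa)$), expands the second derivative of that quotient, and combines it with the two facts $A_p'(\kappa)>0$ and $\kappa A_p'(\kappa)-A_p(\kappa)<0$ from \cite{amos1974computation} to bound $A_p''$ above by a manifestly negative quantity --- three lines, but resting on two nontrivial external results. You instead run a self-contained sign-propagation argument off the Riccati identity $A_p'=1-A_p^2-\frac{p-1}{\kappa}A_p$ that the paper already derives in Theorem~1: your identity $(\star)$ and its derivative check out, the small-$\kappa$ expansion $A_p(\kappa)=\kappa/p-\kappa^3/(p^2(p+2))+O(\kappa^5)$ does give $A_p''<0$ near the origin, and at any first zero $\kappa_0$ of $A_p''$ your computation forces $\kappa_0A_p'''(\kappa_0)=-2A_p'(2A_p+\kappa_0A_p')<0$, contradicting $A_p'''(\kappa_0)\ge 0$ from the one-sided difference quotient (note you only need $A_p>0$ and $A_p'>0$ here, not the upper bound $A_p<1$). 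What your approach buys is independence from Simpson's convexity theorem and from the inequality $\kappa A_p'<A_p$, at the cost of a third-order identity, a power-series base case, and the smoothness/closedness bookkeeping you correctly flag; the paper's approach is shorter but less self-contained. Both proofs ultimately reduce to the same two primitive facts about $A_p$ (positivity and the positivity of $A_p'$), so neither is strictly more general than the other.
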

	\begin{proof} 
		It is shown in Theorem 3 in \cite{simpson1984some} that $
		\frac{\kappa}{A_p(\kappa)}$ is strictly convex for $\kappa \geq 0$, $p \geq
		2$. Thus, $\frac{-\kappa A_p''(\kappa) A_p^2(\kappa) - 2A_p(\kappa)
			A_p'(\kappa) (A_p(\kappa) - \kappa A_p'(\kappa))}{A_p^4(\kappa)} > 0 $.
		Since $A_p(\kappa) > 0$, we get
		$A_p''(\kappa) < \frac{2A_p'(\kappa) }{\kappa A_p(\kappa)} (\kappa A_p'(\kappa) - A_p(\kappa))$. 
		Now we have: \textbf{1)} $A_p'(\kappa) >0$ and $\kappa A_p'(\kappa) -
		A_p(\kappa) < 0$, (Equation 15 in \cite{amos1974computation}); \textbf{2)}
		$A_p(\kappa) = \frac{I_{p/2}(\kappa)}{I_{p/2-1}(\kappa)} > 0 $, since
		$I_{v}(\kappa)$ is positive when $v \geq 0$, and $\kappa > 0$ (Equation 9.6.1
		in \cite{abramowitz1964handbook}). Therefore,
		$A_p''(\kappa) < 0$ and it is strictly concave. This completes the proof of
		Lemma 1. 
	\end{proof}
	
	Building on Lemma 1, we proceed to show that Newton's Method is guaranteed to
	converge to the solution for $A_p(\kappa) = \bar{r}$.
	
	\begin{theorem} Newton's method is guaranteed to converge to the solution for
		$A_p(\kappa) = \bar{r}$. 
	\end{theorem}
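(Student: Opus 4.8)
The plan is to observe that the M-step iteration is precisely Newton's method for the scalar equation $g(\kappa) := A_p(\kappa) - \bar{r} = 0$: by Claim 3 in the proof of Theorem 1 the denominator $1 - A_p^2(\kappa) - \frac{p-1}{\kappa}A_p(\kappa)$ is exactly $A_p'(\kappa)$, so the update reads $\kappa_{n+1} = \kappa_n - g(\kappa_n)/g'(\kappa_n)$. Everything needed about $g$ on $(0,\infty)$ is already in hand: it is $C^2$ (indeed $C^\infty$) there, strictly increasing since $g'(\kappa) = A_p'(\kappa) > 0$ (Equation 15 in \cite{amos1974computation}), strictly concave since $g''(\kappa) = A_p''(\kappa) < 0$ by Lemma 1, and it has a unique zero $\kappa^*$ with $g(0^+) = -\bar{r} < 0$ and $g(\kappa) \to 1 - \bar{r} \ge 0$ as $\kappa \to \infty$ (Theorem 1 and its three claims). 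So the statement reduces to the classical convergence behavior of Newton's method on a monotone, concave function, and the work is to make that precise here.

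I would run the core argument through the tangent-line characterization of concavity. At any iterate $\kappa_n > 0$ the tangent $L_n(x) = g(\kappa_n) + g'(\kappa_n)(x - \kappa_n)$ lies weakly above the graph of $g$, and $\kappa_{n+1}$ is its unique root (the slope $g'(\kappa_n)$ is positive). Two consequences follow. First, $\kappa_{n+1} \le \kappa^*$, because $L_n(\kappa^*) \ge g(\kappa^*) = 0 = L_n(\kappa_{n+1})$ while $L_n$ is increasing. Second, if moreover $\kappa_n \le \kappa^*$ then $g(\kappa_n) \le 0$ by monotonicity, so $\kappa_{n+1} = \kappa_n - g(\kappa_n)/g'(\kappa_n) \ge \kappa_n > 0$. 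Hence once an iterate lies in $(0,\kappa^*]$, the whole tail stays in $(0,\kappa^*]$ and is nondecreasing; being bounded above it converges to some $\ell \in (0,\kappa^*]$, and letting $n\to\infty$ in the recursion, using continuity of $A_p$ and $A_p'$ together with $A_p'(\ell) > 0$, forces $g(\ell) = 0$, so $\ell = \kappa^*$ by the uniqueness from Theorem 1.

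It then remains to handle the starting point, which I expect to be the one genuinely delicate step. The initial value $\kappa_0 = \frac{\bar{r}p - \bar{r}^3}{1-\bar{r}^2} = \frac{\bar{r}(p-\bar{r}^2)}{1-\bar{r}^2}$ is positive because $0 < \bar{r} \le 1$ (Claim 1) makes numerator and denominator positive, excluding the degenerate case $\bar{r}=1$ in which $\kappa^* = \infty$. What must still be argued is that $\kappa_0 \le \kappa^*$, equivalently $A_p(\kappa_0) \le \bar{r}$ --- that Banerjee's approximation lands on the left of the root --- or, short of that, that a single Newton step from $\kappa_0$ already falls in $(0,\kappa^*]$; this is where the guarantee genuinely uses the specific initialization rather than an arbitrary one, since from a point far to the right of $\kappa^*$ a Newton step can overshoot below $0$ (there $g'$ is tiny while $g \approx 1-\bar{r}$). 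Finally, for the rate promised alongside the statement, once the iterates are near $\kappa^*$ the standard Taylor-remainder estimate gives $|\kappa_{n+1} - \kappa^*| \le \frac{\max_{J}|A_p''|}{2\,\min_{J}|A_p'|}\,|\kappa_n - \kappa^*|^2$ on a compact neighborhood $J$ of $\kappa^*$ where $A_p'$ is bounded away from $0$ (possible since $A_p'(\kappa^*) > 0$ and $A_p'$ is continuous), which is quadratic convergence.
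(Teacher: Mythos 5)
Your proposal is correct and follows essentially the same route as the paper: both arguments rest on $A_p'(\kappa) > 0$ and $A_p''(\kappa) < 0$ (Lemma 1) to show that Newton iterates started at or below the root increase monotonically, remain bounded above by the root, and hence converge to it, with your tangent-line formulation of concavity being just a repackaging of the paper's Taylor-remainder identity $e_{n+1} = \tfrac{1}{2}\frac{A_p''(\xi_n)}{A_p'(\kappa_n)}e_n^2$. The initialization issue you flag as ``still to be argued'' is not resolved in the paper either --- its proof explicitly assumes $0 < \kappa_0 \le r$ and never verifies that the Banerjee starting value $\frac{\bar{r}p-\bar{r}^3}{1-\bar{r}^2}$ lies to the left of the root --- so your observation that concavity forces $\kappa_{n+1} \le \kappa^*$ from \emph{any} positive iterate (reducing the open question to whether $\kappa_1 > 0$) is, if anything, a slight sharpening of what the paper establishes.
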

	\begin{proof} 
		Assume $\kappa = r$ is the solution, \ie $A_p(r)-\bar{r}=0$. Let's start with a point $\kappa_0$ and $0<\kappa_0 \leq r$. We define $e_n = \kappa_n -r$. By Newton's updating rule, we have
		\begin{equation}
		e_{n+1} = e_n - \frac{A_p(\kappa_n)-\bar{r}}{A_p'(\kappa_n)}
		\end{equation}
		By Taylor's Theorem, we have $A_p(r) = A_p(\kappa_n-e_n) =  A_p(\kappa_n)-e_nA_p'(\kappa_n)+\frac{1}{2}e_n^2A_p''(\xi_n) = \bar{r} $, where $\kappa_n \leq \xi_n \leq r$. 
		Then we have
		\begin{equation}
		e_{n+1} =  \frac{e_nA_p'(\kappa_n) - A_p(\kappa_n)+\bar{r}}{A_p'(\kappa_n)} = \frac{1}{2} \frac{A_p''(\xi_n)}{A_p'(\kappa_n)} e_n^2.
		\end{equation}
		Therefore, $e_n \leq 0$ for all $n$ since $A_p''(\xi_n) < 0 $ and
		$A_p'(\kappa_n) > 0$. That implies $\kappa_n \leq r$ and $A_p(\kappa_n) \leq
		\bar{r}$ for all $n$. Therefore, from Equation (1), we have $e_{n+1} \geq
		e_n$. Thus, the sequence $e_0, e_1, \ldots, e_n$ is an increasing sequence and
		bounded by 0, and therefore, it must have a limit $e^*$. Accordingly, the sequence
		$\kappa_0, \kappa_1, \ldots, \kappa_n$ must  have a limit $\kappa^*$. From
		Equation (1), we have $\lim\limits_{n \rightarrow \infty} e_{n+1} =
		\lim\limits_{n \rightarrow \infty} (e_n -
		\frac{A_p(\kappa_n)-\bar{r}}{A_p'(\kappa_n)})$, and thus $A_p(\kappa^*) =
		\bar{r}$ and $\kappa^* = r$. Therefore, if we choose any positive starting
		point $\kappa_0$, such that $\kappa_0 \leq r$, Newton's method is guaranteed to
		converge to the solution.
	\end{proof}
	
	We have shown that Newton's method is guaranteed to converge to the solution
	for $A_p(\kappa) = \bar{r}$. The update rule is simply given by $\kappa_{n+1} =
	\kappa_n - \frac{A_p(\kappa_n) - \bar{r}}{A_p'(\kappa_n)}$, where
	$A_p'(\kappa_n) = 1-A_p^2(\kappa_n) - \frac{p-1}{\kappa_n}A_p(\kappa_n)$. Next,
	we show the convergence rate of Newton's method.
	
	\begin{lemma} 
		The rate of convergence for calculating $A_p(\kappa) = \bar{r}$ using Newton's
		Method is quadratic and $e_{n+1} \approx Ce_n^2$ with $C \in (-1, 0)$. 
	\end{lemma}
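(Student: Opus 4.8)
The plan is to read the asymptotic error constant straight off the exact Newton error identity already obtained in Equation~(2), namely $e_{n+1} = \frac{1}{2}\frac{A_p''(\xi_n)}{A_p'(\kappa_n)}e_n^2$ with $\kappa_n \le \xi_n \le r$. Setting $C_n := \frac{1}{2}\frac{A_p''(\xi_n)}{A_p'(\kappa_n)}$ we already have $e_{n+1} = C_n e_n^2$ exactly, so the whole statement reduces to showing that $C_n$ converges to a finite limit $C$ and that $C \in (-1,0)$.

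First I would prove $C_n \to C := \frac{1}{2}\frac{A_p''(r)}{A_p'(r)}$. By Theorem~2 the iterates satisfy $\kappa_n \uparrow r$, and since $\kappa_n \le \xi_n \le r$ the squeeze theorem forces $\xi_n \to r$. The functions $\kappa \mapsto A_p'(\kappa)$ and $\kappa \mapsto A_p''(\kappa)$ are smooth on $(0,\infty)$ (ratios and derivatives of modified Bessel functions, analytic there), and $A_p'(r) > 0$ (Equation~15 of \cite{amos1974computation}, as already invoked in Theorem~1), so $A_p'(\kappa_n) \to A_p'(r) > 0$ and $A_p''(\xi_n) \to A_p''(r)$, giving $C_n \to C$. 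This already yields $\lim_n |e_{n+1}| / e_n^2 = |C| \in (0,\infty)$, \ie exactly quadratic convergence with asymptotic constant $C$; and $C < 0$ is immediate, since $A_p''(r) < 0$ by Lemma~1 (strict concavity, using $p \ge 2$) while $A_p'(r) > 0$.

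The remaining and main obstacle is the lower bound $C > -1$, equivalently $A_p''(r) + 2A_p'(r) > 0$; I would prove this as a property of $A_p$ at every $\kappa > 0$, so \emph{a fortiori} at $\kappa = r$. Starting from the Riccati-type identity $A_p'(\kappa) = 1 - A_p^2(\kappa) - \frac{p-1}{\kappa}A_p(\kappa)$ (already used to write $A_p'$ in closed form), differentiate to get $A_p''(\kappa) = -2A_pA_p' - \frac{p-1}{\kappa}A_p' + \frac{p-1}{\kappa^2}A_p$, then substitute $\frac{p-1}{\kappa} = \frac{1 - A_p^2 - A_p'}{A_p}$ and multiply through by $A_p > 0$; after the cancellation $2A_pA_p'(1-A_p) - (1-A_p^2-A_p')A_p' = A_p'^2 - A_p'(1-A_p)^2$ this reduces $A_p'' + 2A_p' > 0$ to the $\kappa$-free inequality $\frac{(1-A_p^2-A_p')^2}{p-1} + A_p'^2 > A_p'(1-A_p)^2$, to be checked for $a := A_p(\kappa) \in (0,1)$ and $b := A_p'(\kappa) \in (0,1/p)$ subject to $1-a^2-b = \frac{p-1}{\kappa}a \in (0,1-a^2)$. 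The catch is that this inequality is tight: as $\kappa \to 0^+$ we have $a \to 0$, $b \to 1/p$, and both sides tend to $1/p$, so no crude slack argument can work --- a short expansion in fact gives $A_p(\kappa)(A_p''+2A_p') = \frac{2\kappa}{p^2} + O(\kappa^2) > 0$ near the origin, while for large $\kappa$ the term $\frac{(1-a^2-b)^2}{p-1}$ dominates because $b = \Theta(\kappa^{-2})$. I therefore expect the clean finish to be a two-regime argument: for $\kappa$ bounded below, use the sharp two-sided bounds on $A_p$ of \cite{amos1974computation,simpson1984some} to make $1-a^2-b$ large enough for the first term to carry the inequality; for small $\kappa$, use that the target inequality is equivalent to $e^{2\kappa}A_p'(\kappa)$ being increasing, and verify this from its boundary value $A_p'(0^+) = 1/p$ and the sign of its derivative near $0$. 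This sign bound is where essentially all the effort lies; once it is established, $C = \frac{A_p''(r)}{2A_p'(r)} \in (-1,0)$ and $e_{n+1} \approx Ce_n^2$ follow at once, completing the proof.
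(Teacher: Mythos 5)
Your setup is sound, and your treatment of the asymptotic constant is actually more careful than the paper's: you justify $\xi_n\to r$ and $\kappa_n\to r$ by squeezing and conclude $C_n\to C=\frac{1}{2}A_p''(r)/A_p'(r)$, whereas the paper simply replaces $\xi_n$ by $\kappa_n$ with an ``$\approx$''. The sign $C<0$ is handled identically (concavity from Lemma 1 plus $A_p'>0$). But the heart of the lemma is the lower bound $C>-1$, and there your proposal has a genuine gap: you reduce it to the inequality $\frac{(1-A_p^2-A_p')^2}{p-1}+A_p'^2>A_p'(1-A_p)^2$, note that (after multiplying by $A_p$, which vanishes at the origin and thereby manufactures the degeneracy) it becomes tight as $\kappa\to 0^+$, and then only sketch a two-regime argument that you yourself identify as where ``essentially all the effort lies.'' That step is never carried out, and as written it is the entire content of the bound.

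The paper avoids this with a one-line argument using ingredients you already have. Work with the ratio rather than the sum: differentiating $A_p'=1-A_p^2-\frac{p-1}{\kappa}A_p$ gives $\frac{A_p''}{A_p'}=-2A_p-\frac{p-1}{\kappa}+\frac{p-1}{\kappa^2}\cdot\frac{A_p}{A_p'}$, and Amos's inequality $\kappa A_p'(\kappa)<A_p(\kappa)$ (Equation 15 of \cite{amos1974computation}, already invoked in Lemma 1) yields $\frac{A_p}{A_p'}>\kappa$, so the last term strictly exceeds $\frac{p-1}{\kappa}$ and exactly cancels the middle one. Hence $\frac{A_p''}{A_p'}>-2A_p>-2$, i.e.\ $C>-A_p(r)>-1$. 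This is in fact \emph{stronger} than your target (it proves $A_p''+2A_pA_p'>0$ rather than $A_p''+2A_p'>0$, and the former implies the latter since $A_p\in(0,1)$ and $A_p'>0$), requires no case analysis, and has no tightness issue. Replacing your two-regime plan with this substitution closes the gap; the rest of your argument then goes through.
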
 
	\begin{proof} Equation (2) shows the convergence rate is quadratic. Then, we have $e_{n+1} = \frac{1}{2} \frac{A_p''(\xi_n)}{A_p'(\kappa_n)} e_n^2 \approx \frac{1}{2} \frac{A_p''(\kappa_n)}{A_p'(\kappa_n)} e_n^2$ and  
		\begin{equation*}
		\begin{split}
		\frac{A_p''(\kappa_n)}{A_p'(\kappa_n)} 
		&=	\frac{(-2A_p(\kappa_n) - \frac{p-1}{\kappa_n} )A_p'(\kappa_n) + \frac{p-1}{\kappa_n^2}A_p(\kappa_n) } {A_p'(\kappa_n)} \\
		&> -2A_p(\kappa_n) - \frac{p-1}{\kappa_n} + \frac{p-1}{\kappa_n^2}\kappa_n = -2A_p(\kappa_n)
		\end{split}
		\end{equation*}
		where the last inequality uses Equation 15 in \cite{amos1974computation}. Therefore, $\frac{A_p''(\kappa_n)}{A_p'(\kappa_n)} \in (-2A_p(\kappa_n), 0)$. We have shown that $\lim\limits_{\kappa \rightarrow 0} A_p(\kappa) = 0$, $\lim\limits_{\kappa \rightarrow \infty} A_p(\kappa) = 1$ and $A_p'(\kappa) > 0$ in Theorem 1, and hence $A_p(\kappa_n) \in (0,1)$. Therefore, $e_{n+1} \approx Ce_n^2$, where $C \in (-1, 0)$. 
	\end{proof}
	

	\section{Experiments}
	
	In this section, we empirically evaluate the performance of \our.  We
	implemented \our and the baseline methods in JAVA and conducted all the
	experiments on a computer with 2.9 GHz Intel Core i7 CPU and 16GB memory.
	
	\subsection{Experimental Setup}
	
	\subsubsection{Data}
	
	We use both synthetic and real-life datasets to evaluate \our.  The real-life
	datasets are semantic traces from Twitter users collected by Zhang
	\etal \cite{zhang2016gmove}.  The first dataset (LA) consists of million-scale
	geo-tagged tweets created by Los Angeles users from 2014.08.01 to 2014.11.30.
	Following the preprocessing steps in \cite{zhang2016gmove}, we first group the
	tweets by user ID to obtain the location history for each user.  Since two
	consecutive records in a raw location history can be large, we further segment
	the location history into dense semantic traces with a time threshold $\Delta t
	= 6 ~h$, such that the time gap between any two consecutive records is no
	larger than 6 hours. After preprocessing, we obtain approximately 30 thousand
	semantic traces.  The second dataset (NY) consists of the geo-tagged tweets in
	New York City during 2014.08.01 and 2014.11.30. We preprocess the NY data in a
	similar manner and obtain approximately 42 thousand trajectories in total.
	
	We also generate multiple synthetic data to verify the theoretical analysis of
	our \our model. For generating points from the vMF distribution, we use the
	code from Chen \textit{et al.} \cite{chen2015parameter}. Given the synthetic
	points, we apply our proposed Newton's method for estimating the parameters of
	the \our model, and evaluate the approximation errors and convergence speed.

	

	\subsubsection{Baseline Methods}
	
	We compare our \our model with the following baseline methods: 
	
	\begin{enumerate}
		
		\item \textsc{Law} \cite{brockmann2006scaling} is a widely used mobility
		model based on the L\'{e}vy flight law with long-tailed distributions. 
		
		\item \textsc{HMM} \cite{MathewRM12} uses HMM to model the spatial locations 
		in the trace data for mobility modeling.
		
		\item \textsc{ST-HMM} is an extension of \textsc{HMM} that models both
		spatial and temporal information in the trace data.
		
		\item GMove \cite{zhang2016gmove} is the state-of-the-art mobility
		model for semantic traces. It differs from \our in the text modeling part.
		It uses the bag-of-words model to represent text messages and uses
		multinomial distributions to generate the observed messages.
		
		\item \textsc{GHMM} is an adaption of \our. It uses independent Gaussians to model text vectors instead of the vMF distribution. 
		
	\end{enumerate}
	
	
	\subsubsection{Evaluation Protocol}
	
	
	We use next location prediction as a downstream task for evaluating the quality
	of the \our model.  Given a semantic trace dataset, we randomly
	select 70\% traces for model training and use the rest 30\% for testing.  For a
	test trajectory $[x_1, x_2, ..., x_R]$, we assume the first $R-1$ locations
	$[x_1, x_2, ..., x_{R-1}]$ are observed and attempt to recover the last record
	$x_R$.  Specifically, we first form a candidate pool by mixing $x_R$ with other
	records whose creating time and distances are close
	to $x_R$.  \footnote{To select negative records, we set the distance threshold
		to 3.5 kilometers on LA and 2.0 kilometers on NY; and we set the time
		threshold to 300 seconds on both LA and NY.} After the candidate pool is
	formed, we use the \our model to select the top-$K$ most
	likely visited records and see whether the ground-truth appears in the top-$K$
	list. We use the prediction accuracy $@K$ to measure the performance of
	different models, \ie, the percentage of test traces for which the ground-truth
	record is recovered by the top-$K$ list.
	

	

	\subsection{Results on Synthetic Data}
	
	Figure \ref{fig:synthetic1} shows the performance of our used Newton's method
	for estimating the parameters of a vMF distribution on synthetic data. As shown
	in Figure \ref{fig:synthetic1} (a), our estimation method converges extremely fast for estimating
	$\kappa$, achieving approximation errors smaller than $10^{-13}$
	after three iterations.  Figure \ref{fig:synthetic1} (b) shows the $\mu$ and $\kappa$ estimation performance on
	synthetic datasets with different sizes.  We can see the approximation error
	tends to become smaller on synthetic data sets with more samples.  This is
	expected, as a small number of samples may lead to biased estimations of the true
	parameter values. The results in Figure \ref{fig:synthetic2} shows the estimation
	performance for different $\kappa$ and dimension  $p$ (the number of samples is
	100,000). Generally, we do not observe obvious patterns showing how the
	approximation errors change with different $\kappa$ and $p$, but the relative
	approximation errors are quite small under different $\kappa$ and  $p$ values.
	
	\begin{figure}[ht!]
		\centering
		\subfigure[$\kappa$ convergence plot]{\includegraphics[width=0.49\linewidth]{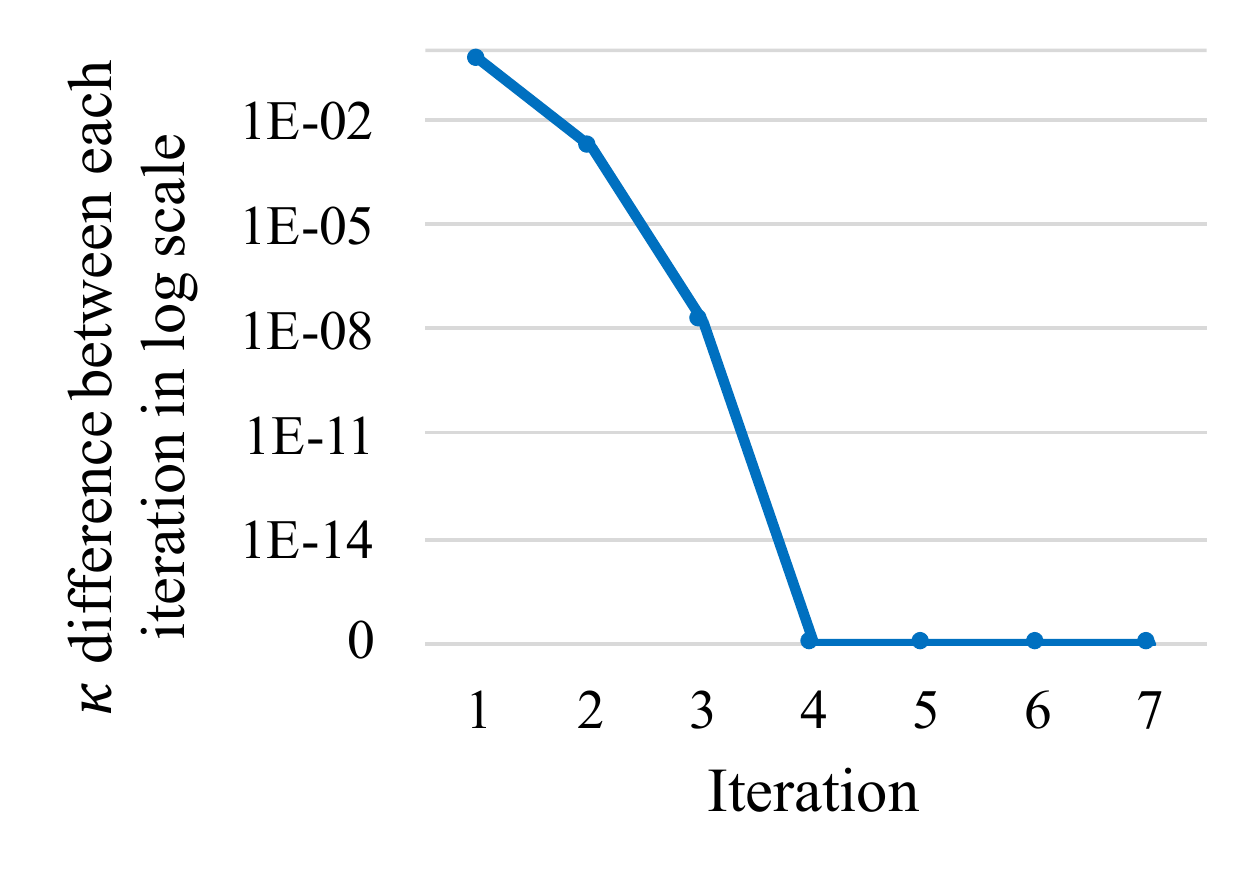}}
		\subfigure[vMF parameter estimation error with $p= 100$ and $\kappa=100$]
		{\includegraphics[width=0.49\linewidth]{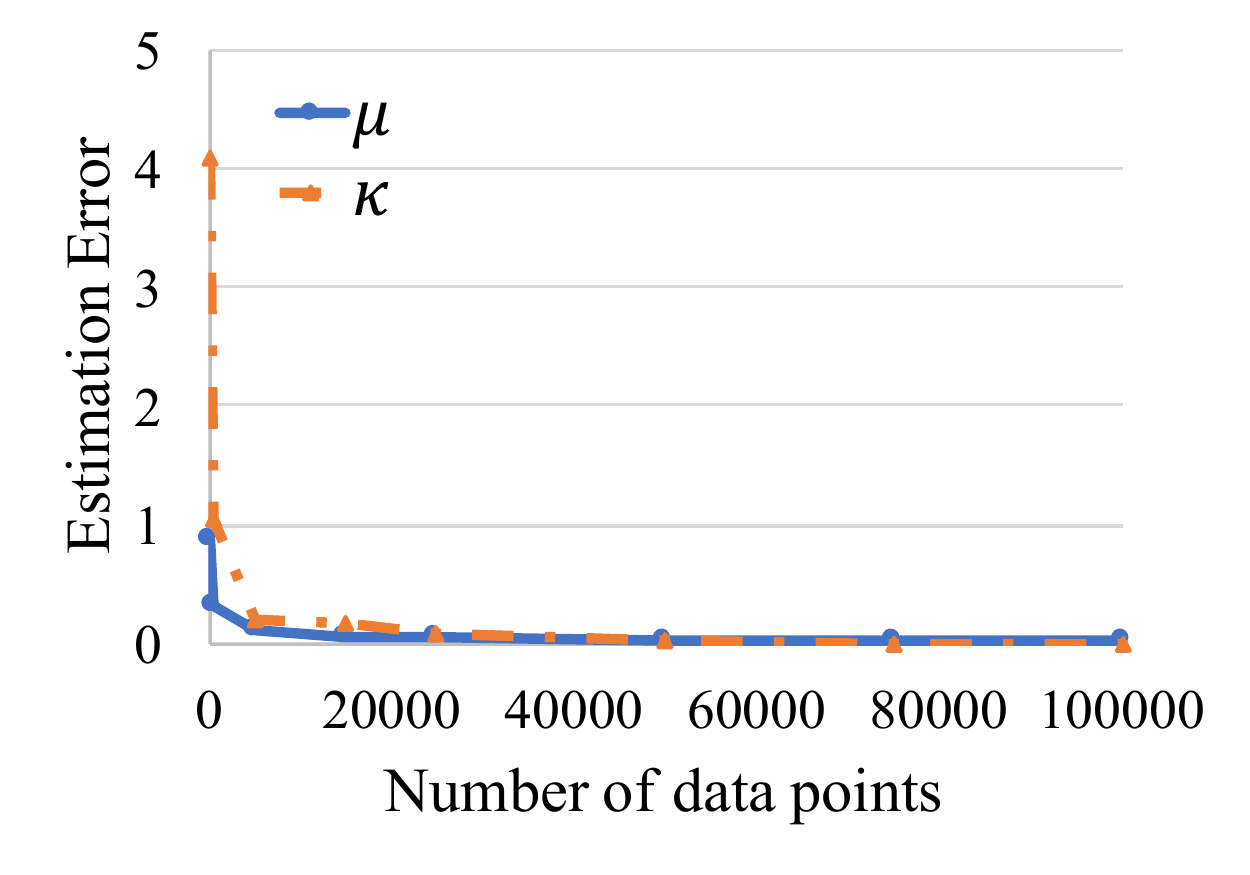}}
		\caption{The convergence of the Newton's method.}
		\label{fig:synthetic1}
	\end{figure}
	
	\begin{figure}[ht!]
		\centering
		\subfigure[varying $\kappa$ with $p= 100$]{\includegraphics[width=0.49\linewidth]{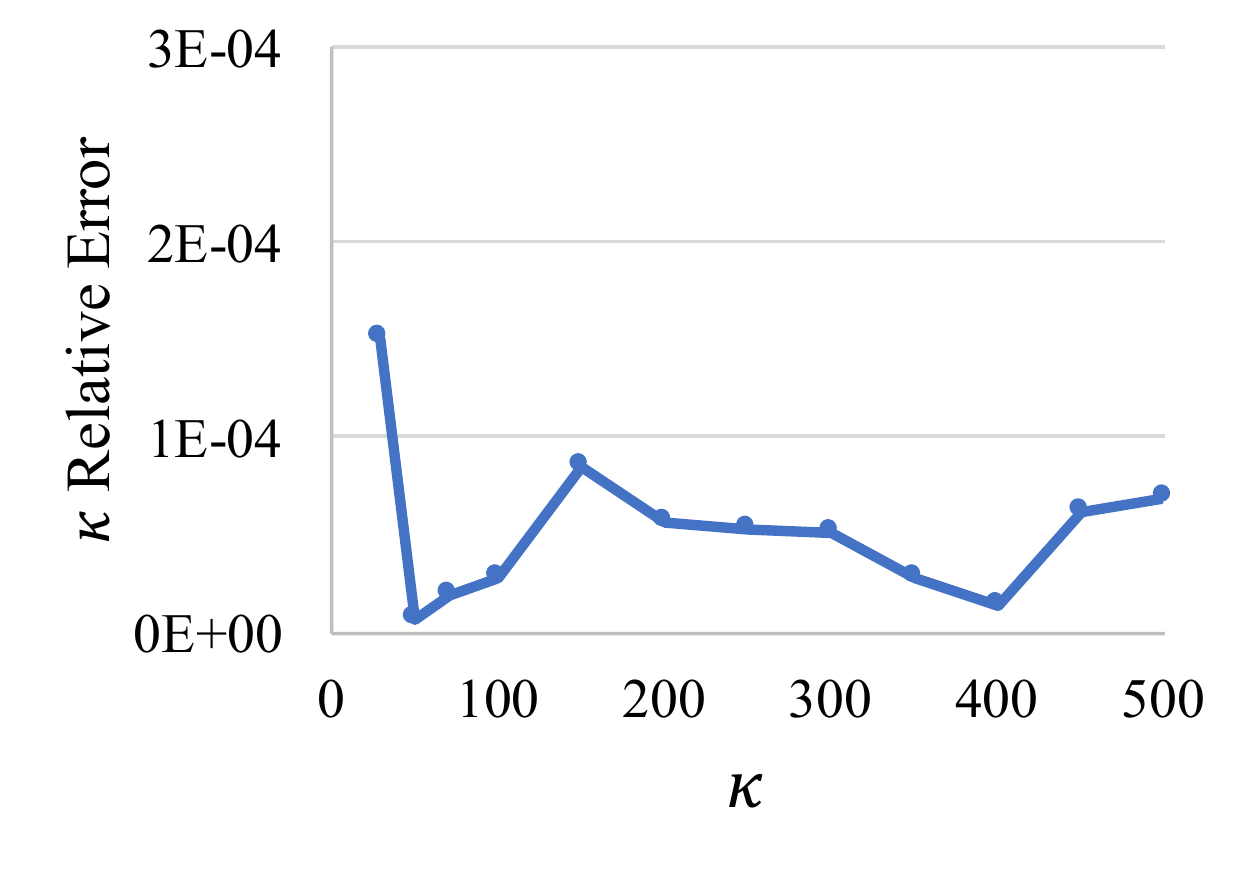}}
		\subfigure[varying $p$ with $\kappa = 100$]{\includegraphics[width=0.49\linewidth]{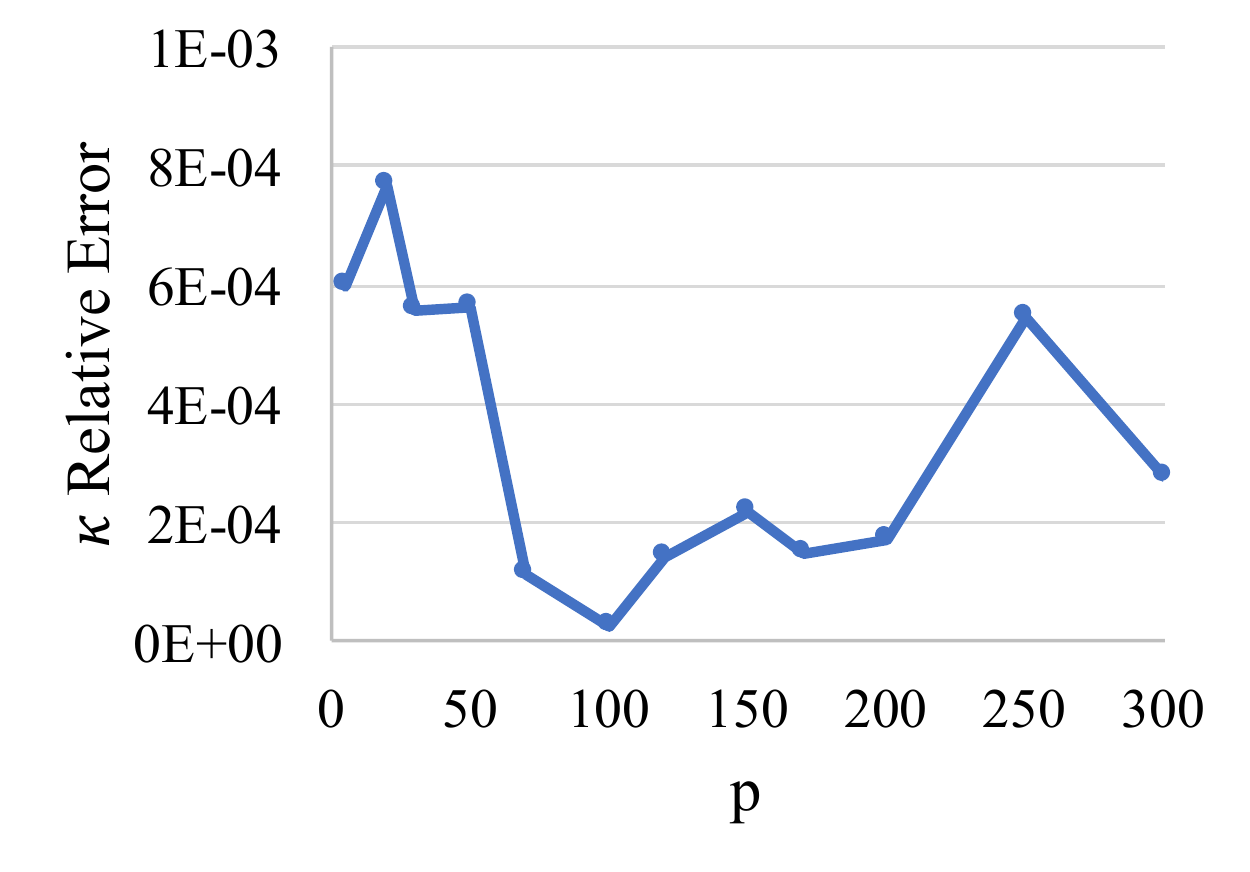}}
		\caption{Estimation error for the concentration parameter $\kappa$}
		\label{fig:synthetic2}
	\end{figure}
	
	
	
	
	\subsection{Results on Real-Life Data}

	\subsubsection{Visualization of the Mobility Models}
	
	In this set of experiments, we set the number of states to 50 on LA and NY to
	obtain mobility models. After parameter inference, each state is characterized by: (1) a
	two-dimensional Gaussian distribution for the spatial location; (2) a
	one-dimensional Gaussian distribution for the time; and (3) a 30-dimensional
	vMF distribution for the semantics. 
	
	\begin{figure*}[ht!]
		\centering
		\subfigure[LA dataset state keywords and transitions]{\includegraphics[width=1\linewidth]{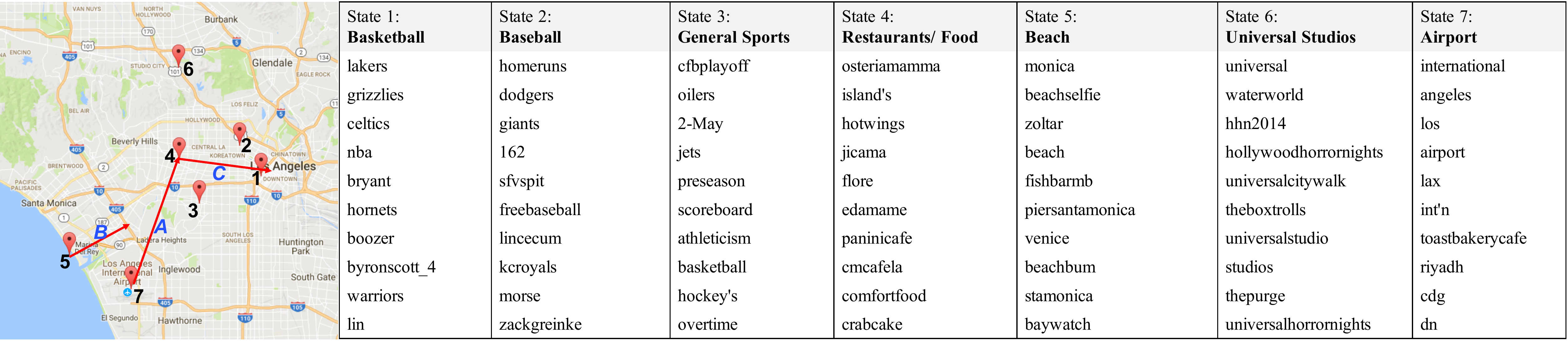}}
		\subfigure[NY dataset state keywords and transitions]{\includegraphics[width=1\linewidth]{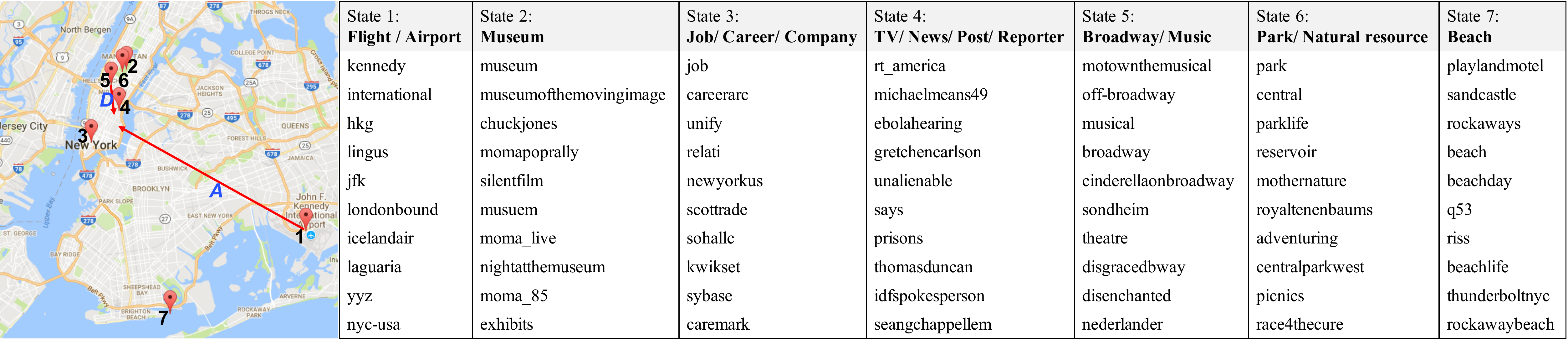}}
		\caption{The visualization of the mobility models. The digits (1-7) indicate
			different state locations and the letters (A-D) indicate frequent human
			movements.
		}
		\label{fig:lastatetopics}
	\end{figure*}
	
	
	Figure \ref{fig:lastatetopics}
	visualizes a number of representative states and some frequent transitions
	among them.  We plot
	the mean location of some states as well as the top-10 keywords from the
	vocabulary whose embeddings are the closest to the vMF mean directions.
	Most of the top-10 keywords for the same state carry consistent and clear
	semantics. For example, for the \textsc{Baseball} state on the LA dataset:
	\textit{homeruns} is a specific baseball term; \textit{dodgers} is the baseball
	team in LA; \textit{giants} is the baseball team in San Francisco; \textit{162}
	indicates that there are 162 games for each team in the Major League Baseball
	(MLB) season; and the rest six keywords are all related to baseball too. We
	have examined the center locations of the states, and found that the
	geographical locations well match the semantic meanings of different states.
	
	Another interesting finding is that in LA dataset, the mean direction of the
	\textit{General Sports} state lies in-between the \textit{Basketball} state and
	the \textit{Baseball} state in the embedding space. Also, the concentration
	parameter $\kappa$ for \textit{General Sports} state is lower than
	\textit{Basketball} and \textit{Baseball} state. Such phenomenon intuitively
	makes sense since \textit{General Sports} is a  broader topic and the
	semantics of the tweets are more scattered.


	We have also observed some interesting state transitions. As shown in Figure
	\ref{fig:lastatetopics}, the following transitions receive high probabilities in the \our model:
	(A) moving from airports to restaurants; (B) enjoying beach activities at the
	Venice beach, and then moving around for other leisure activities; (C)
	going to concerts after having food; (D) watching shows at Broadway and then
	having other sightseeing activities in NYC Downtown. These high-probability
	transitions match people's movements in the real world well.


	
	\subsubsection{Performance for Next Location Prediction}

	
	Figure \ref{fig:laresults} shows the performance of next location prediction
	for different mobility models.  It can be seen that our \our model outperforms
	the state-of-the-art GMove model by 3.2\% on average. 
	The performance difference shows that the text embedding can better
	capture the semantics of text messages and reduce text sparsity. Also, the vMF
	distribution 
	unleashes the discriminative power of text embeddings in a directional metric
	space.
	
	\begin{figure}[ht!]
		\centering
		\subfigure[LA]{\includegraphics[width=0.49\linewidth]{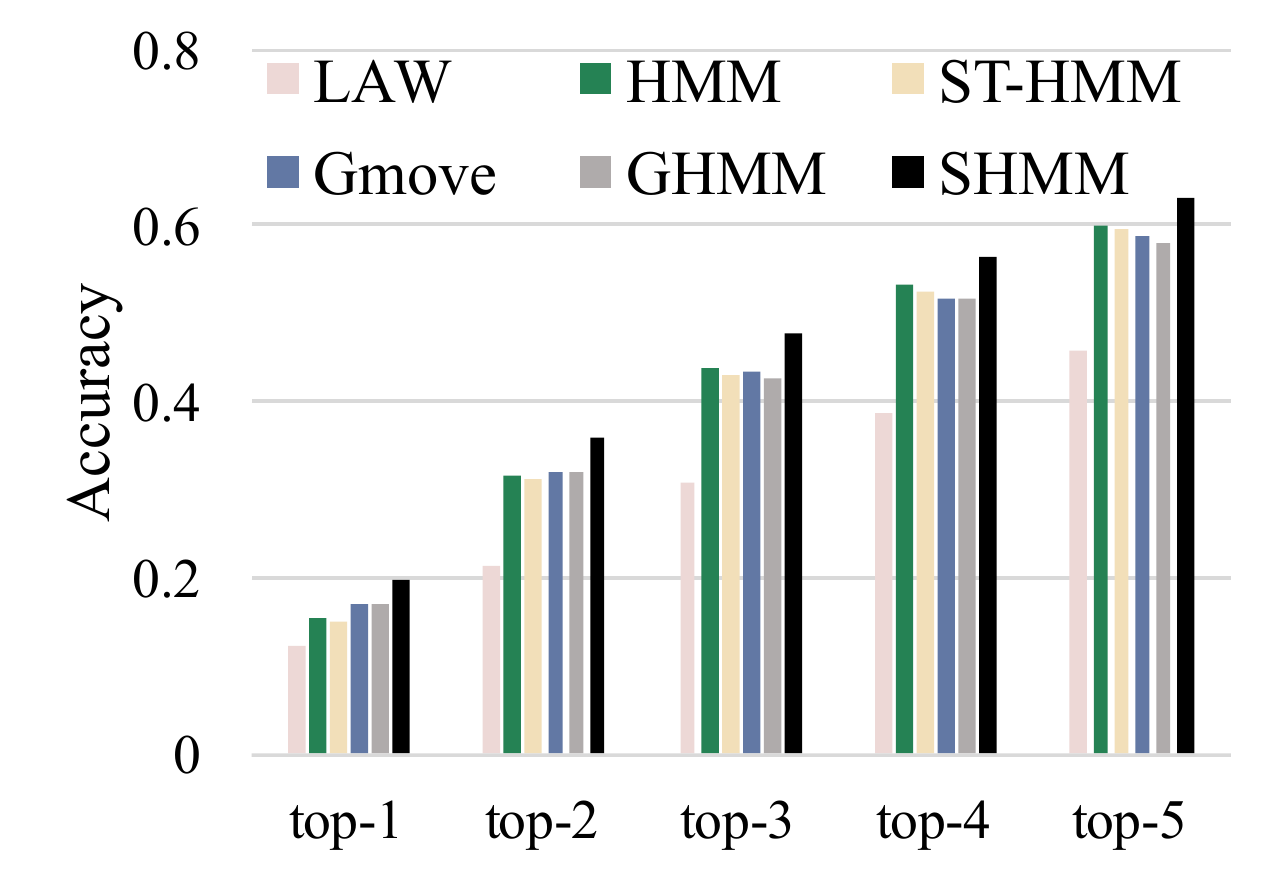}}
		\subfigure[NY]{\includegraphics[width=0.49\linewidth]{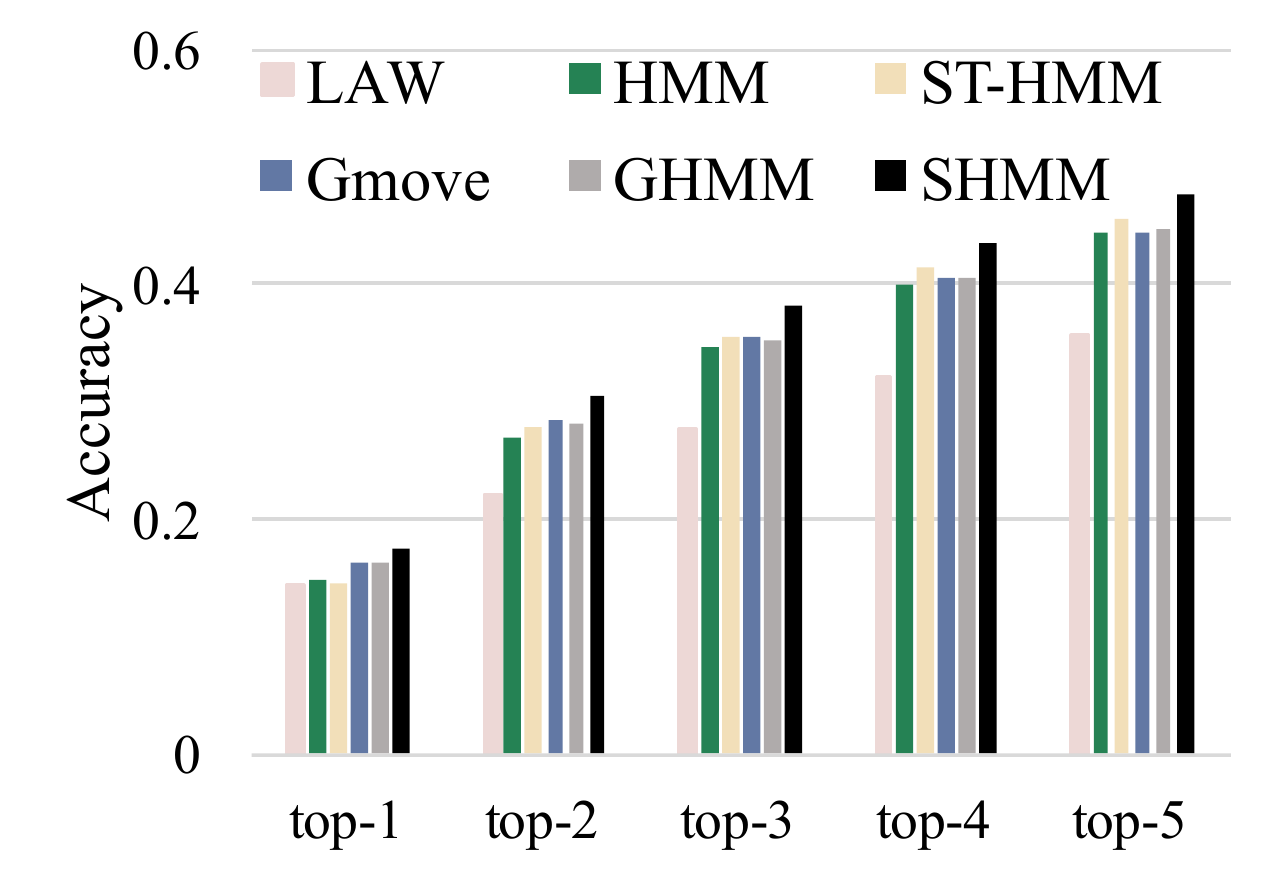}}
		\caption{The accuracies of top-$K$ location prediction.}
		\label{fig:laresults}
	\end{figure}

	\subsubsection{Effects of Parameters}
	
	In Figure \ref{fig:lastates}, we study the performance of \our and GMove when the number
	of states varies.  We find that the performance of both models generally increases
	with the number of states.  One major reason is that the semantics of people's
	activities are separated at finer granularities when the number of states is
	large.  For example, we can see from the LA data set that \textit{General
		Sports}, \textit{Basketball} and \textit{Baseball} are three separated
	topics. 
	If the number of states is not large enough,
	these states may be clustered as one single topic. On the other hand, one
	caveat for choosing the number of states is that a large number of states could
	incur high computational overhead, and also harm the interpretability of the
	result model because the same semantics may be split into duplicate ones. 
	
	\begin{figure}[!ht]
		\centering
		\subfigure[LA]{\includegraphics[width=0.49\linewidth]{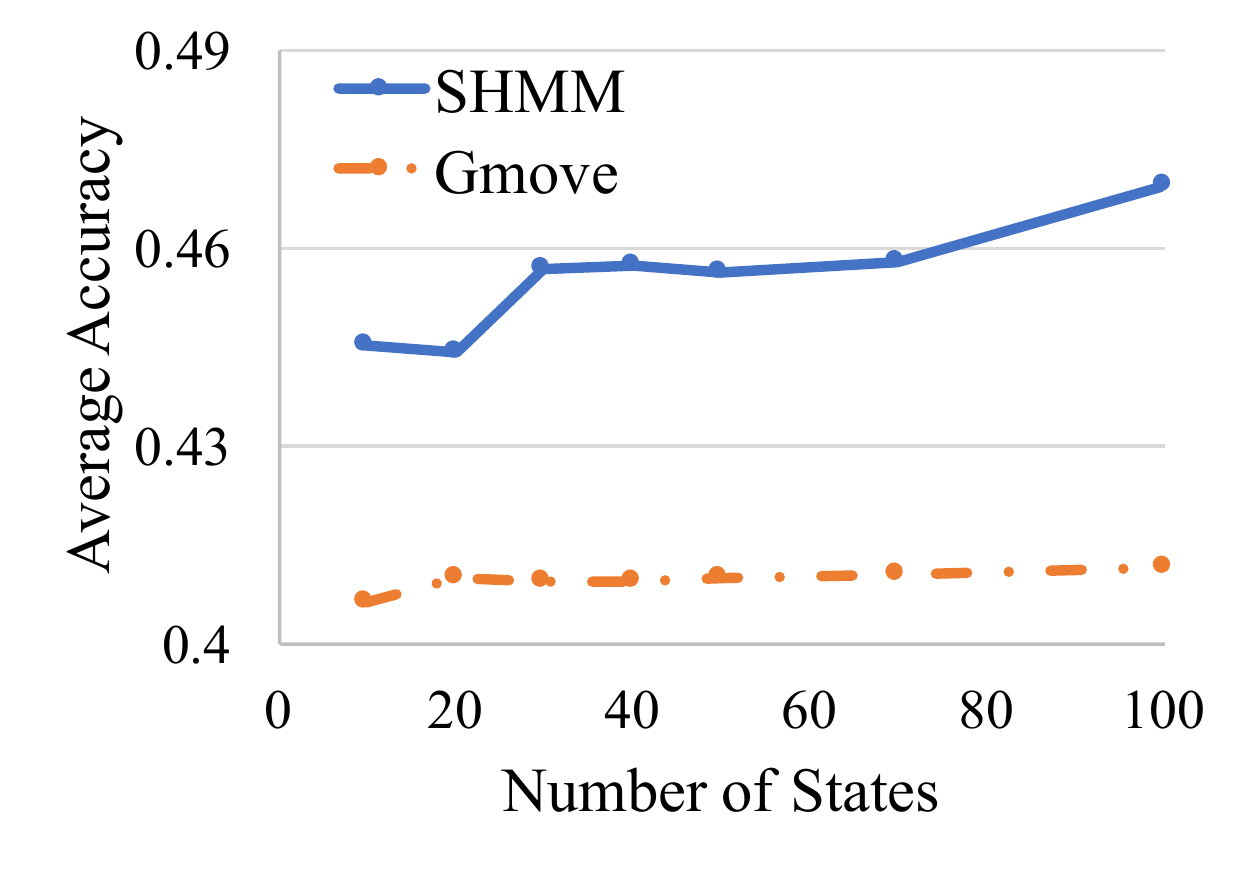}}
		\subfigure[NY]{\includegraphics[width=0.49\linewidth]{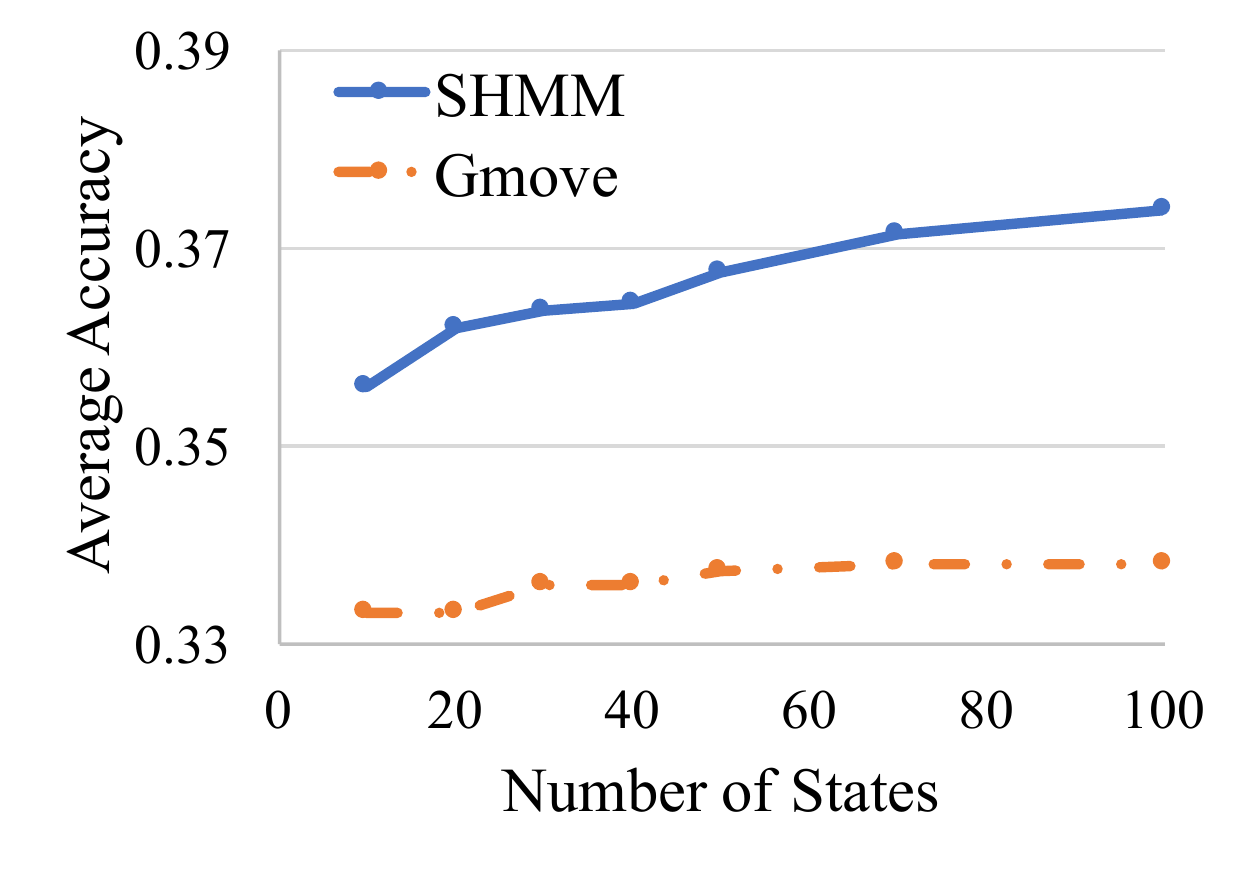}}
		\caption{Prediction accuracy v.s. the number of states.}
		\label{fig:lastates}
	\end{figure}


	\subsubsection{Efficiency Comparison}
	Finally, we compare the modeling training time between \our and GMove when the
	number of states varies. Generally speaking, the training time of both models
	increases quadratically with the number of states.  The training time of \our
	is obviously smaller than that of GMove, \eg, when the number of states is 100,
	training \our is 25.7\% faster on LA and 40.9\% faster on NY, and the speedups
	are even larger when the number of states or the data size increases. This is because \our
	models low-dimensional text embeddings  instead of high-dimensional
	bag-of-words, and thus involves much fewer parameters.  In addition, the
	estimation of the parameters for the vMF distribution is cheap and converges
	fast.
	
	
	
	\begin{figure}[h]
		\centering
		\subfigure[LA]{\includegraphics[width=0.49\linewidth]{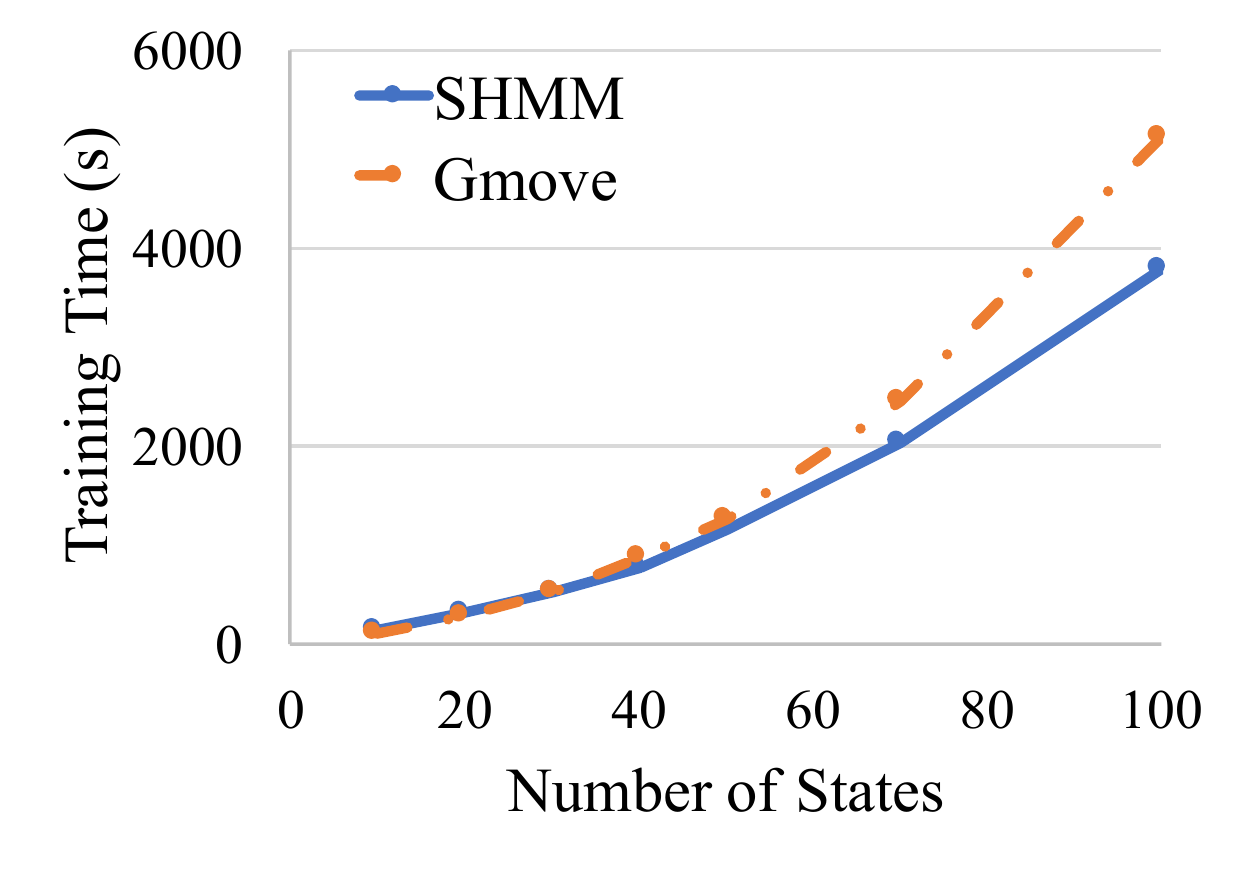}}
		\subfigure[NY]{\includegraphics[width=0.49\linewidth]{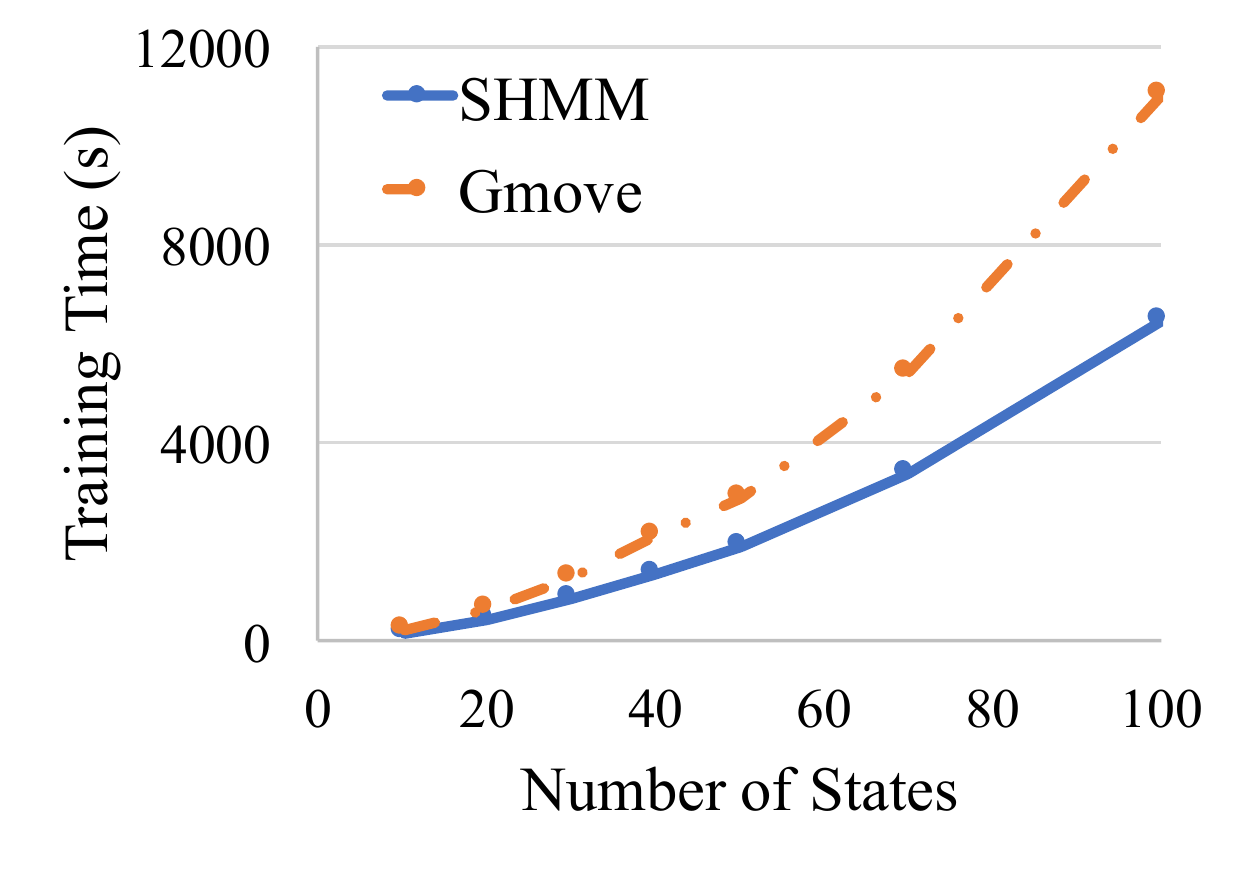}}
		\caption{Running time v.s. the number of states.}
		\label{fig:latrainingtime}
	\end{figure}

	\section{Related Work}
	
	
	\textbf{Human mobility modeling.} Classic human mobility modeling methods focus
	on mining the spatiotemporal regularities underlying human movements.
	Generally, existing mobility modeling methods can be divided into two
	categories: pattern-based methods and
	model-based methods.  Pattern-based
	methods aims at discovering specific mobility patterns that occur regularly.
	Different mobility patterns have been introduced to capture people's movement
	regularities, such as frequent sequential patterns
	\cite{giannotti2007trajectory}, periodic patterns \cite{LiDHKN10}, and
	co-location patterns \cite{KalnisMB05}.  Model-based methods use
	statistical models to characterize the human mobility, and learn the parameters
	of the designed model from the observed trace data.  Mathew \emph{et al.}
	\cite{MathewRM12} use the hidden Markov model to capture the sequential
	transition regularities of human mobility; Brockmann \textit{et al.}
	\cite{brockmann2006scaling} proposed that human mobility can be modeled by a
	continuous-time random-walk model with long-tail distribution; Cho \etal
	\cite{Cho2011} introduce periodic mobility models to discover the periodicity
	underlying human movements.
	
	While the above mobility modeling methods focus on spatiotemporal regularities
	without considering text data, recent years are witnessing growing interest in
	modeling human mobility from semantic trace data
	\cite{Ying2011,wu2015semantic,zhang2016gmove,ZhangHSLP14}.  Among these works, the
	state-of-the-art GMove model  \cite{zhang2016gmove} is the most relevant to our
	model.  Both GMove and \our use hidden Markov models to model the generation
	process of the observed semantic trace data. However, \our is different from
	GMove in that it encodes the semantics of user activities with text embeddings,
	and uses the vMF distribution to model the text embeddings in the HMM model. As
	such, the \our involves much fewer parameters and well unleashes the
	discriminative power of text embeddings in a directional metric space.

	It is worth mentioning that, there are quite a number of works that uses human
	trace data for the location prediction problem
	\cite{wang2015regularity,liu2016predicting}.  Typically, they extract features
	that are important for predicting which place the user tends to visit next
	based on discriminative models such as recurrent neural networks.  While we use
	location prediction as an evaluation task in our experiments, our work is quite
	different from these works. Instead of optimizing the performance of location
	prediction, our focus is to learn interpretable  models that reveals the
	regularities underlying human movements. Besides location prediction, our
	learned mobility models can be used for many other downstream tasks as well.

	
	\vpara{vMF-based learning.} There are some existing works that utilize vMF
	distribution for different learning tasks. Dhillon \textit{et al.}
	\cite{dhillon2003modeling} and Banerjee \textit{et al.}
	\cite{banerjee2005clustering} are two pioneering works that use the vMF
	distributions to handle directional data, which demonstrate inspiring results
	for text categorization and gene expression analysis.  
	Besides, Gopal and Yang
	\cite{gopal2014mises} recently applied vMF distributions for clustering
	analysis, and proposed variational inference procedures for estimating the
	parameters of the vMF clustering model.  Batmanghelich \textit{et al.}
	\cite{batmanghelich2016nonparametric} proposed a spherical topic model based on
	the vMF distribution, which accounts for word semantic regularities in language
	and has been demonstrated to be superior than multi-variate Gaussian
	distributions. However, there are no previous works that integrate the vMF
	distribution with HMMs for semantic trace data. To the best of our knowledge,
	we are the first to demonstrate that the vMF distribution can work well with
	HMM for directional data with theoretical guarantees.
	
	
	

	\section{Conclusion}
	
	We proposed a spherical hidden Markov model for learning interpretable human
	mobility model from semantic trace data. Our model uses text embeddings to
	capture the semantics of text messages and integrate the vMF distribution into
	the hidden Markov model for generating such text embeddings.  We have
	theoretically proved that the Expectation-Maximization algorithm is able to
	work with vMF distribution, and that the Newton's method can be applied for
	efficiently solving the M-step with quadratic convergence rate.  Our
	experiments on synthetic data simulations verify our theoretical analysis.
	Furthermore, by applying our model to real-life semantic trace datasets, we are
	able to obtain highly interpretable mobility models, which intuitively make
	sense and outperform baseline models for downstream tasks like location
	prediction.

	\section{Acknowledgments} This work was sponsored in part by the U.S. Army
	Research Lab.  under Cooperative Agreement No.  W911NF-09-2-0053 (NSCTA),
	National Science Foundation IIS-1017362, IIS-1320617, and IIS-1354329,
	HDTRA1-10-1-0120, and Grant 1U54GM114838 awarded by NIGMS.
	
	\bibliography{Bibliography-File}
	\bibliographystyle{aaai}
	
\end{document}